\documentclass[letterpaper, 10 pt, conference]{ieeeconf}  

\usepackage[hidelinks]{hyperref}
\hypersetup{
    colorlinks=true,
    linkcolor=black,
    filecolor=magenta,      
    urlcolor=blue,
}
\usepackage{subfigure}
\usepackage[noadjust]{cite}
\usepackage{amsmath}
\usepackage{graphicx}
\usepackage{amssymb}
\usepackage{colortbl}
\usepackage{arydshln}
\usepackage{stfloats}
\usepackage{lipsum}
\usepackage[lined,boxed,ruled,linesnumbered]{algorithm2e}
\SetKwInOut{KwIn}{input}
\SetKwInOut{KwOut}{output}
\usepackage{multirow}
\usepackage{extarrows}
\usepackage{mathrsfs}
\usepackage{upgreek}
\newtheorem{theorem}{Theorem}
\newtheorem{remark}{Remark}

\newtheorem{problem}{Problem}

\newtheorem{assumption}{Assumption}

\newtheorem{definition}{Definition}
\newtheorem{lemma}{Lemma}

\usepackage[table]{xcolor}

\IEEEoverridecommandlockouts                              
\overrideIEEEmargins

\title{\LARGE \bf Compression Repair for Feedforward Neural Networks Based on Model Equivalence Evaluation}
\author{Zihao Mo, Yejiang Yang, Shuaizheng Lu, and Weiming Xiang
\thanks{This research was supported by the National Science Foundation, under NSF CAREER Award no. 2143351, NSF CNS Award no. 2223035, and NSF IIS Award no. 2331938.}
\thanks{The authors are with the School of Computer and Cyber Sciences, Augusta University, Augusta, GA 30912, USA. {\tt\small wxiang@augusta.edu}}
}

\begin{document}
\maketitle
\thispagestyle{empty}
\pagestyle{empty}

\begin{abstract}
In this paper, we propose a method of repairing compressed Feedforward Neural Networks (FNNs) based on equivalence evaluation of two neural networks. In the repairing framework, a novel neural network equivalence evaluation method is developed to compute the output discrepancy between two neural networks. The output discrepancy can quantitatively characterize the output difference produced by compression procedures. Based on the computed output discrepancy, the repairing method first initializes a new training set for the compressed networks to narrow down the discrepancy between the two neural networks and improve the performance of the compressed network. Then, we repair the compressed FNN by re-training based on the training set. We apply our developed method to the MNIST dataset to demonstrate the effectiveness and advantages of our proposed repair method.
\end{abstract}

\section{Introduction}
Back in 1943, McCulloch and Pitts \cite{mcculloch1943logical} brought up an idea about using logical calculus to simulate nervous activity, recognized as the origin of neural networks. Since then, neural networks have developed over the decades and become a fundamental tool in modern intelligent society. It has been applied in many areas, such as pattern recognition \cite{554195}, \cite{LITJENS201760}, image processing \cite{NIPS2012_c399862d}, \cite{SCHMIDHUBER201585}, computer vision \cite{khan2018guide}, \cite{wang2015visual}, etc. However, the evolution of neural networks is accompanied by the exponential growth of the scale and computation cost. According to the paper \cite{8970294}, \cite{thompson2022computational}, training a large feedforward neural network is power-consuming and is in high demand for memory usage. In an ACAS Xu \cite{owen2019acas} verification problem, 45 Feedforward Neural Networks (FNNs) have been deployed, with a total of 300 neurons of each network, which requires huge computational resources and is time-consuming \cite{katz2017reluplex}. Thus, the compression of neural networks becomes a hot topic in the industrial area, which can shrink down the scale of FNNs to be deployed in practical applications. Two main state-of-the-art compression methods are pruning \cite{Yang_2017_CVPR} and quantization \cite{zhou2017incremental}, where the former aims to reduce neurons and layers of the model, and the latter focuses on replacing high-precision parameters with low-precision parameters. 

However, neural network compression always comes at a cost. In the survey \cite{8995969}, the authors outlined four methods for compression and acceleration, but the summary also identified certain potential issues, notably a substantial reduction in the accuracy of the compressed network. In \cite{xiang2018reachability}, the severness of accuracy loss and importance of safety verifications in Cyber-Physical Systems (CPS) applications has been addressed; for example, collisions may occur if ACAS Xu verification fails. Thus, the verification of compressed FNNs is essential before deploying them. The paper \cite{xiang2022approximate} gives a concrete value to characterize the difference between two FNNs by performing the reachability analysis between the networks, which is more intuitive to identify whether the network meets those criteria. This hybrid zonotopes method \cite{zhang2023reachability} is also based on reachability analysis to verify the safety robustness of a neural feedback system by providing a quantitative result.

In this paper, we propose a novel merging method to perform reachability analysis between two feedforward neural networks with the same inputs to evaluate the equivalence of the compressed network with respect to the original one. With the given discrepancy result, we can identify the guaranteed output reachable domain of the compressed network. Further, we propose a repair framework for the compressed FNN based on the equivalence result to narrow down the discrepancy with the original FNN while retaining its performance in solving specific tasks. To demonstrate the effectiveness of our repair method, we apply it to classify the MNIST database and compare the repair outcomes of the compressed FNN with respect to the original FNN.

The remainder of the paper is organized as follows: Section II is about Preliminaries. Section III introduces our merging method and the framework of the repair method. Section IV is the experiment demonstrating the repair method. Section V presents the conclusion.

\section{Preliminaries}
In this paper, an FNN $\Phi: \mathbb{R}^{n_{0}} \rightarrow \mathbb{R}^{n_{L}}$ is defined in the recursive equations form of
\begin{equation}
\label{eq: FNN def}
\begin{cases} 
    \mathbf{y}^{\{l\}} = \phi^{\{l\}}(\mathbf{y}^{\{l-1\}}), \ l = 1,\ldots, L    \\
     \mathbf{y}^{\{L\}} =  \Phi(\mathbf{x}^{\{0\}}),~\mathrm{where} ~\mathbf{x}^{\{0\}}=\mathbf{y}^{\{0\}}
\end{cases}
\end{equation}
where $\mathbf{y}^{\{l\}}$ is the output of the $l$th layer, $\mathbf{x}^{\{0\}} \in \mathbb{R}^{n^{\{0\}}}$ is the input and $\mathbf{y}^{\{L\}}\in \mathbb{R}^{n^{\{l\}}}$ is the output of the FNN, respectively.
$\phi^{\{l\}}$ denotes the operation of the $l$th layer of the FNN, which can be fully connected layer $\phi_{\mathrm{fc}}$ or ReLU layer $\phi_{\mathrm{ReLU}}$, and $\mathbf{y}^{\{l\}}$ is the output of the $l$th layer. The fully connected operation $\phi_{\mathrm{fc}}$ is defined as
\begin{equation} \label{eq:fc_output}
    \mathbf{y}^{\{l\}}=\phi_{\mathrm{fc}}(\mathbf{y}^{\{l-1\}})=\mathbf{W}^{\{l\}}\mathbf{y}^{\{l-1\}}+\mathbf{b}^{\{l\}}
\end{equation}
where $\mathbf{W}^{\{l\}} \in \mathbb{R}^{n^{\{l\}}\times n^{\{l-1\}}}$ and $\mathbf{b}^{\{l\}} \in \mathbb{R}^{n^{\{l\}}}$ denote the weight matrices and the bias vectors for layer $l$, respectively. The ReLU operation $\phi_{\mathrm{ReLU}}$ is defined as
\begin{equation}
\label{eq: relu function}
    \phi_{\mathrm{ReLU}}(\mathbf{y}^{\{l\}})=[\max(0,y_1^{\{l\}}),\ldots,\max(0,y_n^{\{l\}})]^{T}
\end{equation}
where $y_i^{\{l\}}$ is the $i$th element of the vector $\mathbf{y}^{\{l\}}$ in (\ref{eq:fc_output}).

To enable sound equivalence evaluation for two FNNs, which essentially needs to consider all possible outputs of the networks, the following reachable set of FNNs is introduced. 

\begin{definition}
    Given an input set $\mathcal{X}^{\{0\}}\in \mathbb{R}^{n^{\{0\}}}$ for FNN (\ref{eq: FNN def}), we define the following set
    \begin{equation}
    \label{eq: set def}
        \mathcal{Y}^{\{L\}} = \{\mathbf{y}^{\{L\}}\mid \mathbf{y}^{\{L\}}=\Phi(\mathbf{x}^{\{0\}}),~ \mathbf{x}^{\{0\}}\in \mathcal{X}^{\{0\}}\}
    \end{equation}
where $\mathcal{Y}^{\{L\}} \subseteq \mathbb{R}^{n^{\{L\}}}$ is called the output set of FNN (\ref{eq: FNN def}).
\end{definition}
\begin{remark}
There are a number of available reachable set representations used for FNN reachability analysis, such as zonotope \cite{singh2018fast}, polytope \cite{singh2019abstract}, FVIM \cite{henk2017basic}, etc. For instance, in the MNIST dataset application Section IV, we will use ImageStar proposed in \cite{tran2020verification} in our approach. ImageStar $\Theta$ is a tuple $\langle c, V, P \rangle$ where $c\in \mathbb{R}^{h\times w\times nc}$ is the anchor image, $V=\{v_1,v_2,\cdots,v_m\}$ is a set of $m$ images in $\mathbb{R}^{h\times w\times nc}$ called generator images, $P:\mathbb{R}^m\leftarrow \{\top, \bot \}$ is a predicate, and $h, w, nc$ are the height, width, and number of channels of the images, respectively. The generator images are arranged to form the ImageStar's $h\times w\times nc\times m$ basis array. The set of images represented by the ImageStar is:
\begin{equation}
    \Theta = \{x|x=c+\sum^{m}_{i=1}(\alpha_i v_i),~ \mathrm{where}~P(\alpha_i,\cdots ,\alpha_m)=\top \}, \nonumber
\end{equation}
in which we restrict the predicates to be a conjunction of linear constraints, $P(\alpha)\triangleq C\alpha \leq d$ where, for $p$ linear constraints, $C\in \mathbb{R}^{p\times m}$, $\alpha$ is the vector of $m$ variables, i.e., $\alpha=[\alpha_1, \cdots, \alpha_m]^T$, and $d\in \mathbb{R}^{p\times 1}$. An ImageStar is an empty set if and only if $P(\alpha)$ is empty.
\end{remark}

Based on the output reachable set defined in (\ref{eq: set def}), we can define the set value representation of an FNN as below
\begin{equation}
    \begin{cases}
    \mathcal{Y}^{\{l\}} = \phi^{\{l\}}(\mathcal{Y}^{\{l-1\}}), ~ l=1,2, \ldots, L \\
    \mathcal{Y}^{\{L\}} = \Phi(\mathcal{X}^{\{0\}}) ,~\mathrm{where}~\mathcal{X}^{\{0\}}=\mathcal{Y}^{\{0\}} 
    \end{cases}
\end{equation}
where $\mathcal{Y}^{\{l\}}$ denotes the output reachable set of $l$th layer, and, in particular, $\mathcal{Y}^{\{0\}}$ is the input set $\mathcal{X}^{\{0\}}$ and $\mathcal{Y}^{\{L\}}$ is the output set of the network.

In this paper, the equivalence evaluation aims to characterize the discrepancy between two FNNs, $\Phi_1$ and $\Phi_2$ under the following assumptions.

\begin{assumption}\label{assumption_1}
    The following assumptions hold for two neural networks $\Phi_1$ and $\Phi_2$:
\begin{enumerate}
    \item[(i)] The number of inputs of two neural networks are the same, i.e., $n_1^{\{0\}}=n_2^{\{0\}}$;
    \item[(ii)] The number of outputs of two neural networks are the same, i.e., $n_1^{\{L\}}=n_2^{\{L\}}$;
    \item[(iii)] The number of layers of two neural networks is the same, i.e., $L_1=L_2=L$;
    \item[(iv)] For each layer $l$, two neural networks perform the same operation.
\end{enumerate}
\end{assumption}

\begin{remark} It has to be pointed out that a typical compressed neural network usually consists of a reduced number of layers compared to the original network, which fails to satisfy (iii) and (iv) in Assumption \ref{assumption_1}. However, we can always extend the compressed network by incorporating additional layers as detailed in \cite{xiang2022approximate}. These additional layers equipped with identity weights and zero biases are mandated to transmit information to subsequent layers without any alterations, but meet the requirements of (iii) and (iv). 
\end{remark}

\section{Main Results}

\subsection{Equivalence Evaluation for Two FNNs}

Given an input set $\mathcal{X}^{\{0\}}$ for two FFNs $\Phi_1$ and $\Phi_2$ under Assumption \ref{assumption_1}, the equivalence evaluation in this work is given by quantifying the maximal discrepancy of  $\mathbf{y}_1^{\{L\}}$ and $\mathbf{y}_2^{\{L\}}$, where are the outputs of $\Phi_1$ and $\Phi_2$, respectively. To enable equivalence evaluation of $\Phi_1$ and $\Phi_2$, our first goal is to construct the discrepancy of the outputs of two FNNs with the same inputs, i.e., 
\begin{align}
    \delta = \Phi_1(\mathbf{x}^{\{0\}}) - \Phi_2(\mathbf{x}^{\{0\}}),~\mathbf{x}^{\{0\}} \in \mathcal{X}^{\{0\}}
\end{align}
where $\delta \in\mathbb{R}^{n^{\{L\}}}$ is the discrepancy vector.


For fully connected layers $\phi_{\mathrm{fc}}$ and ReLU layers $\phi_{\mathrm{ReLU}}$, we can obtain the following two results. 

\begin{lemma}\label{lemma_1}
    Consider two FFNs $\mathcal{N}_1$ and $\mathcal{N}_2$ under Assumption \ref{assumption_1}, the following result holds for fully connected layers
    \begin{align} \label{eq:lemma_1_1}
    \begin{bmatrix}
        \mathbf{y}_{1}^{\{l\}}
        \\
        \mathbf{y}_{2}^{\{l\}}
    \end{bmatrix} = \phi_{\mathrm{fc}}\left(\begin{bmatrix}
        \mathbf{y}_{1}^{\{l-1\}}
        \\
        \mathbf{y}_{2}^{\{l-1\}}
    \end{bmatrix}\right) = \tilde{\mathbf{W}}^{\{l\}}\begin{bmatrix}
        \mathbf{y}_{1}^{\{l-1\}}
        \\
        \mathbf{y}_{2}^{\{l-1\}}
    \end{bmatrix}+\tilde{\mathbf{b}}^{\{l\}}
    \end{align}
where $\tilde{\mathbf{W}}^{\{l\}} = \mathrm{diag}\{\mathbf{W}_{1}^{\{l\}},\mathbf{W}_{2}^{\{l\}}\}$ and $\tilde{\mathbf{b}}^{\{l\}}  = [
    (\mathbf{b}_{1}^{\{l\}})^{T}
    ,
    (\mathbf{b}_{2}^{\{l\}})^{T}
]^{T}$ in which $\mathbf{W}_{1}^{\{l\}}$, $\mathbf{W}_{2}^{\{l\}}$, $\mathbf{b}_{1}^{\{l\}}$, $\mathbf{b}_{2}^{\{l\}}$ are the weights and biases of  $\mathcal{N}_1$ and $\mathcal{N}_2$ at layer $l$.
\end{lemma}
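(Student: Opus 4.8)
The plan is to prove the identity by direct verification using block matrix arithmetic, exploiting the block-diagonal structure of $\tilde{\mathbf{W}}^{\{l\}}$ together with the stacked form of $\tilde{\mathbf{b}}^{\{l\}}$. By definition (\ref{eq:fc_output}), each network already satisfies $\mathbf{y}_i^{\{l\}} = \mathbf{W}_i^{\{l\}}\mathbf{y}_i^{\{l-1\}} + \mathbf{b}_i^{\{l\}}$ for $i \in \{1,2\}$, so the entire content of the lemma is to show that these two independent affine maps can be repackaged as a single affine map acting on the concatenated state, with no cross-coupling introduced between the two networks.

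First I would write $\tilde{\mathbf{W}}^{\{l\}}$ explicitly in its block form $\begin{bmatrix} \mathbf{W}_1^{\{l\}} & \mathbf{0} \\ \mathbf{0} & \mathbf{W}_2^{\{l\}} \end{bmatrix}$, where the top-right zero block has size $n_1^{\{l\}} \times n_2^{\{l-1\}}$ and the bottom-left zero block has size $n_2^{\{l\}} \times n_1^{\{l-1\}}$. These dimensions are precisely what render the product with the stacked vector $[(\mathbf{y}_1^{\{l-1\}})^T, (\mathbf{y}_2^{\{l-1\}})^T]^T$ conformable. Carrying out the multiplication block-row by block-row, the off-diagonal zero blocks annihilate the cross terms, so the upper block of the product reduces to $\mathbf{W}_1^{\{l\}}\mathbf{y}_1^{\{l-1\}}$ and the lower block to $\mathbf{W}_2^{\{l\}}\mathbf{y}_2^{\{l-1\}}$.

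Adding the stacked bias $\tilde{\mathbf{b}}^{\{l\}}$ then yields the concatenation of $\mathbf{W}_1^{\{l\}}\mathbf{y}_1^{\{l-1\}} + \mathbf{b}_1^{\{l\}}$ and $\mathbf{W}_2^{\{l\}}\mathbf{y}_2^{\{l-1\}} + \mathbf{b}_2^{\{l\}}$, which by (\ref{eq:fc_output}) applied to each network is exactly $[(\mathbf{y}_1^{\{l\}})^T, (\mathbf{y}_2^{\{l\}})^T]^T$, completing the verification. The argument is entirely mechanical, and I anticipate no genuine obstacle; the only point demanding mild care is the dimensional bookkeeping of the zero off-diagonal blocks, which is what formally guarantees that the merged layer keeps the two subnetworks decoupled. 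In essence, the lemma simply records that a direct-sum (block-diagonal) weight matrix realizes the parallel composition of the two fully connected maps.
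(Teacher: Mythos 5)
Your proposal is correct and matches the paper's own proof in all essentials: both arguments are the same one-line block-matrix verification that the block-diagonal $\tilde{\mathbf{W}}^{\{l\}}$ and stacked $\tilde{\mathbf{b}}^{\{l\}}$ realize the two affine maps (\ref{eq:fc_output}) in parallel, with the paper expanding the stacked $\phi_{\mathrm{fc}}$ and you multiplying out the block product in the reverse direction. Your extra attention to the dimensions of the zero off-diagonal blocks is a harmless elaboration, not a different method.
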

\begin{proof}
    The result can be obtained straightforwardly by the definition of the fully connected layer in the form of (\ref{eq:fc_output}) such as
    \begin{align*}
    \phi_{\mathrm{fc}}\left(\begin{bmatrix}
        \mathbf{y}_{1}^{\{l-1\}}
        \\
        \mathbf{y}_{2}^{\{l-1\}}
    \end{bmatrix}\right)
        &= \begin{bmatrix}
            \phi_{\mathrm{fc}}(\mathbf{y}_1^{\{l-1\}})
            \\
            \phi_{\mathrm{fc}}(\mathbf{y}_2^{\{l-1\}})
        \end{bmatrix}
        \\
           &=\begin{bmatrix}
            \mathbf{W}^{\{l\}}_1(\mathbf{y}_1^{\{l-1\}})+\mathbf{b}_1^{\{l\}}
            \\
            \mathbf{W}^{\{l\}}_2(\mathbf{y}_2^{\{l-1\}})+\mathbf{b}_2^{\{l\}}
        \end{bmatrix}
        \\
        &=\tilde{\mathbf{W}}^{\{l\}} \begin{bmatrix}
        \mathbf{y}_{1}^{\{l-1\}}
        \\
        \mathbf{y}_{2}^{\{l-1\}}
    \end{bmatrix}+\tilde{\mathbf{b}}^{\{l\}}.
    \end{align*}
The proof is complete.
\end{proof}

\begin{lemma}\label{lemma_2}
    Consider two FFNs $\mathcal{N}_1$ and $\mathcal{N}_2$ under Assumption \ref{assumption_1}, the following result 
      \begin{align} \label{eq:lemma_2_1}
    \begin{bmatrix}
        \mathbf{y}_{1}^{\{l\}}
        \\
        \mathbf{y}_{2}^{\{l\}}
    \end{bmatrix} = \phi_{\mathrm{ReLU}}\left(\begin{bmatrix}
        \mathbf{y}_{1}^{\{l-1\}}
        \\
        \mathbf{y}_{2}^{\{l-1\}}
    \end{bmatrix}\right) = \begin{bmatrix}
        \phi_{\mathrm{ReLU}}(\mathbf{y}_{1}^{\{l-1\}})
        \\
        \phi_{\mathrm{ReLU}}\mathbf{y}_{2}^{\{l-1\}})
    \end{bmatrix}
    \end{align}
holds for ReLU layers.
\end{lemma}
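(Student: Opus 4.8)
The plan is to exploit the fact that the ReLU operation defined in (\ref{eq: relu function}) acts coordinate-wise on its input vector: each output entry $\max(0, y_i)$ depends only on the corresponding input entry $y_i$ and on no other coordinate. Consequently, stacking the two layer inputs before applying ReLU must produce the same result as applying ReLU to each input separately and then stacking the outputs. This mirrors the structure of Lemma \ref{lemma_1}, but is in fact simpler, since the fully connected case required verifying that the off-diagonal blocks of $\tilde{\mathbf{W}}^{\{l\}}$ vanish, whereas ReLU introduces no cross-coupling between coordinates at all.

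Concretely, I would first write the stacked argument $[(\mathbf{y}_1^{\{l-1\}})^T,\ (\mathbf{y}_2^{\{l-1\}})^T]^T$ as a single vector whose first $n_1^{\{l-1\}}$ scalar entries are those of $\mathbf{y}_1^{\{l-1\}}$ and whose remaining $n_2^{\{l-1\}}$ entries are those of $\mathbf{y}_2^{\{l-1\}}$. Applying the definition (\ref{eq: relu function}) replaces every scalar entry $y_i$ by $\max(0, y_i)$ independently of the rest. I would then regroup the resulting vector of $\max(0,\cdot)$ terms along the same two-block partition. Because the first block of outputs is a function of the entries of $\mathbf{y}_1^{\{l-1\}}$ alone and the second block is a function of the entries of $\mathbf{y}_2^{\{l-1\}}$ alone, each block is exactly $\phi_{\mathrm{ReLU}}(\mathbf{y}_1^{\{l-1\}})$ and $\phi_{\mathrm{ReLU}}(\mathbf{y}_2^{\{l-1\}})$, respectively, which is precisely the right-hand side of (\ref{eq:lemma_2_1}).

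There is no substantive obstacle to overcome here; the result is immediate from the definition. The only point that warrants explicit statement is the coordinate-separability (diagonal action) of ReLU, i.e., that no coordinate of one subnetwork can influence any coordinate of the other under stacking. Once this separability is made explicit, the claimed identity follows directly, and the proof reduces to a single expansion-and-regrouping step.
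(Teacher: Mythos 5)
Your proposal is correct and takes exactly the same route as the paper, whose proof simply notes that the ReLU operation in (\ref{eq: relu function}) is performed element-wise and that the result therefore follows straightforwardly. You merely spell out the expansion-and-regrouping of the stacked vector explicitly, which the paper leaves implicit.
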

\begin{proof}
    As the ReLU operation is performed in an element-wise manner, the result can be obtained straightforwardly. The proof is complete. 
\end{proof}
\begin{remark}
    It should be noted that the ReLU function may split the reachable set into multiple ones based on the linear constraints. The ReLU function is possible to perform in a different way for the pixel if its bounded range is across the zero. To handle the different situations, our method will split the linear constraints into two groups to force the bounded range of the pixel to fall into one side, leading to $\mathcal{Y}=\mathcal{Y}_1\cup \mathcal{Y}_2 \cup \ldots \cup \mathcal{Y}_n$. With the split strategy, the reachable set may increase exponentially based on the linear constraints, which increases the computation time.
\end{remark}

Besides fully connected and ReLU layers,  we introduce a comparison layer to
construct the
output discrepancy $\mathbf{y}_{\mathrm{cmp}} = \mathbf{y}_1^{\{L\}} - \mathbf{y}_2^{\{L\}}$ to evaluate the equivalence on two FNNs. The comparison layer is considered as an extra layer receiving the output $\mathbf{y}_1^{\{L\}}$ and $\mathbf{y}_2^{\{L\}}$ of FNNs $\mathcal{N}_1$ and $\mathcal{N}_2$ in the form of 
\begin{align} \label{eq:cmp_layer}
    \mathbf{y}_{\mathrm{cmp}} = \phi_{\mathrm{cmp}} \left(\begin{bmatrix}
        \mathbf{y}_{1}^{\{L\}}
        \\
        \mathbf{y}_{2}^{\{L\}}
    \end{bmatrix}\right) = \mathbf{W}_{\mathrm{cmp}}\begin{bmatrix}
        \mathbf{y}_{1}^{\{L\}}
        \\
        \mathbf{y}_{2}^{\{L\}}
    \end{bmatrix} 
\end{align}
where $\mathbf{W}_{\mathrm{cmp}} = \begin{bmatrix}
    \mathbf{I} & -\mathbf{I}
\end{bmatrix}$.

\begin{theorem} \label{thm_1}
    Consider two FFNs $\Phi_1$ and $\Phi_2$ under Assumption \ref{assumption_1}, the following result 
    \begin{align}
        \mathbf{y}_{\mathrm{cmp}} = \Phi_1(\mathbf{x}^{\{0\}}) - \Phi_2(\mathbf{x}^{\{0\}})
    \end{align}
    holds for the output $\mathbf{y}_{\mathrm{cmp}}$ of comparison layer defined in (\ref{eq:cmp_layer}).
\end{theorem}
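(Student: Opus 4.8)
The plan is to show that the block-diagonal merged network — whose fully connected layers are given by Lemma~\ref{lemma_1} and whose ReLU layers are given by Lemma~\ref{lemma_2} — propagates the common input $\mathbf{x}^{\{0\}}$ through both $\Phi_1$ and $\Phi_2$ simultaneously, so that its stacked terminal output is exactly $[(\Phi_1(\mathbf{x}^{\{0\}}))^T,\ (\Phi_2(\mathbf{x}^{\{0\}}))^T]^T$. Once this is established, the claim follows immediately by left-multiplying this stacked output by $\mathbf{W}_{\mathrm{cmp}} = [\mathbf{I}\ \ -\mathbf{I}]$ as in (\ref{eq:cmp_layer}).

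First I would set up an induction on the layer index $l$. For the base case, both copies receive the same input, so the stacked layer-$0$ state is $[(\mathbf{x}^{\{0\}})^T,\ (\mathbf{x}^{\{0\}})^T]^T$, which coincides with $[(\mathbf{y}_1^{\{0\}})^T,\ (\mathbf{y}_2^{\{0\}})^T]^T$ under the convention $\mathbf{y}_1^{\{0\}}=\mathbf{y}_2^{\{0\}}=\mathbf{x}^{\{0\}}$. For the inductive step, assume the stacked state after layer $l-1$ equals $[(\mathbf{y}_1^{\{l-1\}})^T,\ (\mathbf{y}_2^{\{l-1\}})^T]^T$, where $\mathbf{y}_i^{\{l-1\}}$ is the genuine layer-$(l-1)$ output of $\Phi_i$. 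By Assumption~\ref{assumption_1}(iv), layer $l$ has the same type (fully connected or ReLU) in both networks, so exactly one of Lemma~\ref{lemma_1} or Lemma~\ref{lemma_2} applies to the merged layer; in either case the merged operation acts blockwise and returns $[(\mathbf{y}_1^{\{l\}})^T,\ (\mathbf{y}_2^{\{l\}})^T]^T$ with $\mathbf{y}_i^{\{l\}}=\phi_i^{\{l\}}(\mathbf{y}_i^{\{l-1\}})$. Assumption~\ref{assumption_1}(iii) guarantees the induction terminates at the common depth $l=L$, yielding $\mathbf{y}_1^{\{L\}}=\Phi_1(\mathbf{x}^{\{0\}})$ and $\mathbf{y}_2^{\{L\}}=\Phi_2(\mathbf{x}^{\{0\}})$. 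Applying the comparison layer then gives
\begin{align*}
\mathbf{y}_{\mathrm{cmp}} = \begin{bmatrix} \mathbf{I} & -\mathbf{I} \end{bmatrix}\begin{bmatrix} \mathbf{y}_1^{\{L\}} \\ \mathbf{y}_2^{\{L\}} \end{bmatrix} = \mathbf{y}_1^{\{L\}} - \mathbf{y}_2^{\{L\}} = \Phi_1(\mathbf{x}^{\{0\}}) - \Phi_2(\mathbf{x}^{\{0\}}),
\end{align*}
which is the desired identity.

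Honestly, there is no deep obstacle here; the statement is a pointwise chaining of the two lemmas followed by one matrix multiplication. The only point requiring care is the bookkeeping that invokes Assumption~\ref{assumption_1}: parts (iii) and (iv) are precisely what ensure that at every layer the two networks use matching operation types so that a single merged layer is well defined, and that the induction closes at the shared output depth. I would also note explicitly that the split-induced set growth mentioned in the remark after Lemma~\ref{lemma_2} is a set-valued phenomenon and does not affect this theorem, since the statement concerns a single input point $\mathbf{x}^{\{0\}}$, on which the element-wise ReLU acts unambiguously on each block.
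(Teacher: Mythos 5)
Your proof is correct and follows essentially the same route as the paper's: both chain Lemmas \ref{lemma_1} and \ref{lemma_2} layer by layer under Assumption \ref{assumption_1} to obtain the stacked output $[(\Phi_1(\mathbf{x}^{\{0\}}))^T,\ (\Phi_2(\mathbf{x}^{\{0\}}))^T]^T$ and then apply $\mathbf{W}_{\mathrm{cmp}}=[\mathbf{I}\ \ -\mathbf{I}]$. Your version merely makes explicit the induction that the paper compresses into the single composition $\phi^{\{L\}}\circ\cdots\circ\phi^{\{1\}}(\mathbf{x}^{\{0\}})$, which is a fair, slightly more careful writing of the same argument.
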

\begin{proof}
    By Lemmas \ref{lemma_1} and \ref{lemma_2}, and under Assumption \ref{assumption_1}, we have 
    \begin{align}
       \begin{bmatrix}
        \mathbf{y}_{1}^{\{l\}}
        \\
        \mathbf{y}_{2}^{\{l\}}
    \end{bmatrix} = \phi^{\{l\}}\left(\begin{bmatrix}
        \mathbf{y}_{1}^{\{l-1\}}
        \\
        \mathbf{y}_{2}^{\{l-1\}}
    \end{bmatrix}\right)  \ l = 1,\ldots, L 
    \end{align}
    where $\phi^{\{l\}}$ can be either $\phi_{\mathrm{fc}}$ or $\phi_{\mathrm{ReLU}}$. It is worth noting that Lemmas \ref{lemma_1} and \ref{lemma_2} also provide the computation procedures to compute $ \mathbf{y}_{1}^{\{l\}}$ and $ \mathbf{y}_{2}^{\{l\}}$ for each layer. Therefore, with the same input $\mathbf{x}^{\{0\}}$, it leads to 
\begin{align}
       \begin{bmatrix}
        \mathbf{y}_{1}^{\{L\}}
        \\
        \mathbf{y}_{2}^{\{L\}}
    \end{bmatrix} = \begin{bmatrix}
        \phi^{\{L\}}\circ\cdots\circ\phi^{\{1\}}(\mathbf{x}^{\{0\}})
        \\
        \phi^{\{L\}}\circ\cdots\circ\phi^{\{1\}}(\mathbf{x}^{\{0\}})
    \end{bmatrix} = \begin{bmatrix}
        \Phi_1(\mathbf{x}^{\{0\}})
        \\
        \Phi_2(\mathbf{x}^{\{0\}})
    \end{bmatrix} 
    \end{align}

Then, from (\ref{eq:cmp_layer}), one can obtain
\begin{align}
      \mathbf{y}_{\mathrm{cmp}} = \begin{bmatrix}
    \mathbf{I} & -\mathbf{I}
\end{bmatrix}\begin{bmatrix}
        \mathbf{y}_{1}^{\{L\}}
        \\
        \mathbf{y}_{2}^{\{L\}}
    \end{bmatrix} =\Phi_1(\mathbf{x}^{\{0\}}) - \Phi_2(\mathbf{x}^{\{0\}}). 
\end{align}
The proof is complete.     
\end{proof}

As shown in Theorem \ref{thm_1}, the output of the comparison layer, i.e., $\mathbf{y}_{\mathrm{cmp}}$, is the discrepancy vector measuring the output difference between two FFNs $\Phi_1$ and $\Phi_2$. With the help of this discrepancy vector and reachability analysis, we can formally characterize the equivalence of two FNNs. A merged $L+1$ layer FNN $\tilde \Phi$ out of $\Phi_1$ and $\Phi_2$ can constructed in the form of 
    \begin{equation}
\label{eq:merged_FNN}
\begin{cases} 
    \tilde{\mathbf{y}}^{\{l\}} = \phi^{\{l\}}(\tilde{\mathbf{y}}^{\{l-1\}}), \ l = 1,\ldots, L    \\
    \tilde{\mathbf{y}}^{\{L+1\}} = \phi_{\mathrm{cmp}}(\tilde{\mathbf{y}}^{\{L\}}) 
    \\
     \tilde{\mathbf{y}}^{\{L+1\}} =  \tilde{\Phi}(\mathbf{x}^{\{0\}}),~\mathrm{where}~\mathbf{x}^{\{0\}}=\mathbf{y}^{\{0\}}
\end{cases}
\end{equation}
in which fully connected and ReLU hidden layers from $1$ to $L$ are defined by (\ref{eq:lemma_1_1}) and (\ref{eq:lemma_2_1}) and the output layer $L+1$ is defined by (\ref{eq:cmp_layer}). By performing reachability analysis for merged FNN (\ref{eq:merged_FNN}), i.e., computing the output reachable set $\tilde{\mathcal{Y}}^{\{L+1\}}$ of merged FNN (\ref{eq:merged_FNN}), we can formally characterize equivalence between two FNNs $\Phi_1$ and $\Phi_2$. For instance, we can compute the maximal distance of the outputs of two FFNs in terms of 
    \begin{align} \label{eq:delta_1}
\delta_{\max} = \max_{\tilde{\mathbf{y}}^{\{L+1\}} \in \tilde{\mathcal{Y}}^{\{L+1\}}} \left\|\tilde{\mathbf{y}}^{\{L+1\}}\right\|
    \end{align}
In some scenarios, we might be interested in the discrepancies for each dimension, such as the image recognition application in Section IV. We can also make use of the reachable set $\tilde{\mathcal{Y}}^{\{L+1\}}$ to compute the vector of maximal magnitudes at each dimension of $\tilde{\mathcal{Y}}^{\{L+1\}}$, i.e., 
\begin{align}\label{eq:delta_2}
    \tilde{\delta}_{\max} = \max_{\tilde{\mathbf{y}}^{\{L+1\}} \in \tilde{\mathcal{Y}}^{\{L+1\}}} \tilde{\mathbf{y}}^{\{L+1\}}
\end{align}
where the $\max$ operator performs element-wisely on $\tilde{\mathbf{y}}^{\{L+1\}}$. 

\begin{remark}
To perform the efficient equivalence evaluation, the reachable set computation is essential. There exist a number of tools available. For instance, as in the NNV neural network reachability analysis tool, the reachable sets are in
the form of a collection of polyhedral sets \cite{tran2020nnv}, and in the
IGNNV tool, the output reachable set is a family of interval
sets \cite{xiang2018output,xiang2020reachable}. For those types of convex sets, the equivalence evaluation metrics in the description of maximal values can be obtained by testing 
a finite number of vertices in convex sets.   
\end{remark}

\subsection{FNN Compression Repair}
Given an FNN $\Phi_1$ and its compressed version $\Phi_2$,  the goal of repairing the compressed $\Phi_2$ is that the discrepancy between $\Phi_1$ and $\Phi_2$ should satisfy a set of prescribed conditions described by set $\mathcal{O}$, e.g., $\mathcal{O} = \{\tilde{\mathbf{y}}^{\{L+1\}} \mid \left\|\tilde{\mathbf{y}}^{\{L+1\}} \right\|\le d\}$ where $d>0$ is a prescribed threshold. 
The general compressed FNN repair problem can be stated as follows.
\begin{problem}
    Given an FNN $\Phi_1$ and its compressed version $\Phi_2$, an input set $\mathcal{X}^{\{0\}}$, and a prescribed repairing target set $\mathcal{O}$, how does one modify the compressed FNN $\Phi_2$ such that 
    \begin{align} \label{eq:problem_1}
        \tilde{\mathcal{Y}}^{\{L+1\}} \subseteq \mathcal{O}
    \end{align}
where $\tilde{\mathcal{Y}}^{\{L+1\}}$ is the output reachable set of merged $L+1$ layer FNN $\tilde\Phi$ in the form of (\ref{eq:merged_FNN}) that is constructed out of $\Phi_1$ and $\Phi_2$. 
\end{problem}


To address the FNN compression repair problem, normally, the goal of the repair is to minimize the discrepancy between FNNs $\Phi_1$ and $\Phi_2$. From the optimization perspective, the repair problem can be described as 
\begin{align}
\min_{\mathbf{W}_2^{\{l\}},\mathbf{b}_2^{\{l\}},~l=1,\ldots,L}\ell(\mathbf{y}_{1}^{\{L\}},\mathbf{y}_{2}^{\{L\}}) 
\end{align}
where $\ell(\cdot)$ is the loss function describing the discrepancy such as  (\ref{eq:delta_1}) and (\ref{eq:delta_2}). 

To modify $\mathbf{W}_2^{\{l\}}$ and $\mathbf{b}_2^{\{l\}}$ to repair the compressed FNN, a new dataset has to be created for retraining the compressed FNN. A straightforward way is to generate $N$ retraining data pair $\{\mathbf{x}_i^{\{0\}},\mathbf{y}_{i,2}^{\{L\}}\}$, $i=1,\ldots,N$, from FNN $\Phi_2$,  and replace the output samples $\mathbf{y}_{i,2}^{\{L\}}$ with the outputs of the original $\Phi_1$ , i.e., $\{\mathbf{x}_i^{\{0\}},\mathbf{y}_{i,1}^{\{L\}}\}$, $i=1,\ldots,N$, which completely eliminates the discrepancy in data set. 
Furthermore, the cost function $\ell(\cdot)$ for the retraining data set can be then written into the mean square loss function in the retraining process as follows: 
\begin{align}
    \ell(\mathbf{y}_{1}^{\{L\}},\mathbf{y}_{2}^{\{L\}}) = \frac{1}{N}\sum_{i=1}^{N}\left\|\mathbf{y}_{i,1}^{\{L\}}-\mathbf{y}_{i,2}^{\{L\}}\right\| .
\end{align}
With the above loss function, the FNN $\Phi_2$ training process can be viewed as a data-driven procedure to search for the optimal solution.  

However, this method intends to cause overfitting issues and significantly deteriorates network performance, such as accuracy. In this work, we turn to gradually reduce the discrepancy by updating the $\mathbf{y}_{i,2}^{\{L\}}$ in the following way
\begin{equation}
\label{eq: update}
    \hat{\mathbf{y}}_{i,2}^{\{L\}} = \mathbf{y}_{i,2}^{\{L\}}  + \frac{1}{\alpha} \tilde{\delta}_{max}
\end{equation}
where $\alpha \geq 1$ is a tuning parameter to control the step size to the target output, and $\tilde{\delta}_{max}$ is defined by (\ref{eq:delta_2}). Therefore, the data in retraining data is modified to $\{\mathbf{x}_i^{\{0\}},\hat{\mathbf{y}}_{i,2}^{\{L\}}\}$, and the loss function becomes 
\begin{align}\label{eq:loss}
    \ell(\hat{\mathbf{y}}_{2}^{\{L\}}, \mathbf{y}_{2}^{\{L\}}) = \frac{1}{N}\sum_{i=1}^{N}\left\|\hat{\mathbf{y}}_{i,2}^{\{L\}}-\mathbf{y}_{i,2}^{\{L\}}\right\| .
\end{align}

\begin{figure}
	\centering
\includegraphics[scale=.42]{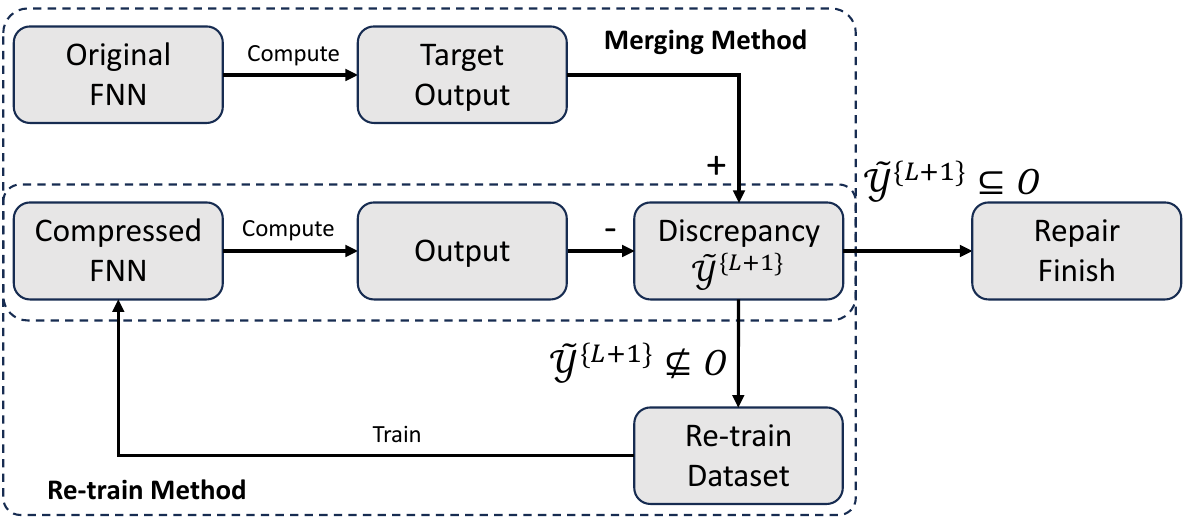}
	\caption{Framework of compressed feedforward neural network repair.}
	\label{FIG: Fig. 1}
\end{figure}
Thus, we can bring up the framework for compressed FNN repair based on equivalence evaluation as shown in Fig. \ref{FIG: Fig. 1}. 

\begin{itemize}
    \item \textbf{Initialization.} Given FNN $\Phi_1$ and its compression $\Phi_2$, we compute the output reachable set $\tilde{\mathcal{Y}}^{\{L+1\}}$ out of merged FNN. If (\ref{eq:problem_1}) is not satisfied, the compressed FNN needs to be re-trained. We compute the discrepancy $\tilde{\delta}_{max}$ based on $\tilde{\mathcal{Y}}^{\{L+1\}}$.    
    \item \textbf{Generate re-training data set.} Generate $N$ samples of $\{\mathbf{x}_i^{\{0\}},\mathbf{y}_{i,1}^{\{L\}},\mathbf{y}_{i,2}^{\{L\}}\}$, and build the re-training data set $\{\mathbf{x}_i^{\{0\}},\hat{\mathbf{y}}_{i,2}^{\{L\}}\}$, $i=1,\ldots,N$, based on (\ref{eq: update}). 
    \item \textbf{Re-train compressed FNN.} Modify the weights and bias of $\Phi_2$ by training $\Phi_2$ using data set $\{\mathbf{x}_i^{\{0\}},\hat{\mathbf{y}}_{i,2}^{\{L\}}\}$, $i=1,\ldots,N$, under the loss function (\ref{eq:loss}). 
    \item \textbf{Evaluate re-training outcome.} After re-training $\Phi_2$, we compute output reachable set $\tilde{\mathcal{Y}}^{\{L+1\}}$ and compares it with the target reachable domain. The repair process is finished only when (\ref{eq:problem_1}) is satisfied. Otherwise, it repeats the repairing process. 
\end{itemize}

Algorithm \ref{alg: repair} summarizes the repairing process for FNN compression based on the equivalence evaluation of two FNNs. It keeps re-training the compressed FNN until the discrepancy meets the requirement. A timeout counter is set up to avoid the repair process falling into a dead loop. The computation of discrepancy follows Theorem \ref{thm_1} to compute and update the discrepancy between the original FNN and the updated compressed FNN. After the repair process, the relationship between reachable set $\tilde{\mathcal{Y}}^{\{L+1\}}$ and repairing target set $\mathcal{O}$ is used to evaluate whether the repair process is a success. 
\begin{algorithm}[t!]
\caption{FNN Compression Repair}
\label{alg: repair}
\KwIn{Original FNN $\Phi_1$, 
    Compressed FNN $\Phi_2$, repairing target set $\mathcal{O}$}
\KwOut{Repaired Compressed FNN $\hat{\Phi}_2$}
\While{True}
{
Compute discrepancy $\tilde{\delta}_{max}$
\\
    Generate $\{\mathbf{x}_i^{\{0\}},\mathbf{y}_{i,1}^{\{L\}},\mathbf{y}_{i,2}^{\{L\}}\}$, $i=1,\ldots,N$ 
    \\
    $\hat{\mathbf{y}}_{i,2}^{\{L\}} \gets \mathbf{y}_{i,2}^{\{L\}}  + \frac{1}{\alpha} \tilde{\delta}_{max}$
   \\
    $\Phi_2 \gets \mathrm{retrain}(\Phi_2, \mathrm{Dataset}(\{\mathbf{x}_i^{\{0\}},\hat{\mathbf{y}}_{i,2}^{\{L\}}\})$ \\
    Compute reachable set $\tilde{\mathcal{Y}}^{\{L+1\}}$ for  $\Phi_2$  \\
    \If{$\tilde{\mathcal{Y}}^{\{L+1\}}\subseteq \mathcal{O}$ or timeout}
    {   
    $\hat{\Phi}_2 \gets {\Phi}_2$
    \\
        \textbf{break}  
    }
}
\textbf{return} $\hat{\Phi}_2$
\end{algorithm}

\section{Application to Compressed Feedforward Neural Networks Repairment}
In this section, to validate the effectiveness of our proposed approach, we use the MNIST data set to perform our task. We apply our equivalence evaluation method on the two FNNs and the repair method to the compressed FNN to narrow down the discrepancy\footnote{The code for the experiment is available at \url{github.com/aicpslab/FNN-repair}}. 

\subsection{Database}
MNIST \cite{lecun-mnisthandwrittendigit-2010} database contains a large number of handwritten digits and is famous as an image classification problem benchmark. The database contains 60,000 training images and 10,000 testing images. Each image is a $28\times28\times1$ grayscale image, and all images are classified into ten labels, from 0 to 9. 

\subsection{Experiment Set Up}
We train an FNN with three layers: the first has 256 neurons, the second has 64 neurons, and the third has 10 neurons. Each hidden layer is followed by a ReLU activation function. The FNN is trained with the training dataset of MNIST for five epochs. As for the compressed FNN, we apply the quantization aware training (QAT) \cite{krishnamoorthi2018quantizing} method to the original FNN, shrinking down the parameter size of the network. Table \ref{tab: overview} shows the comparison of the original network and the compressed network. Table \ref{tab: random im} shows the results with ten randomly picked images. The discrepancy is the mean of the maximum distance between the output value of the two FNNs among all label scores. 

To repair the compressed FNN, we set up the target reachable domains $\mathcal{O}$ as two-thirds of the original discrepancy domains. 
Every iteration compares the last discrepancy domain with the target domains. If it is not within the desired area, a re-train dataset for compressed FNN is generated based on the last discrepancy domain to re-retrain for three epochs. According to (\ref{eq: update}), we use different $\alpha=2,5,10,20$. The re-train process will time out after ten iterations. Except for the ten randomly chosen images, we also randomly chose ten re-train samples from the images where the original FNN gives the wrong predictions.  

\subsection{Results}
First, we perform the FNN repair with $\alpha=10$ in (\ref{eq: update}). Table \ref{tab: overview} is the overview of the three FNNs in the experiment. The original FNN has a $98\%$ accuracy but drops to $91\%$ after compression. However, compared to the size of the three FNNs, the compressed version's size is only one-fourth of the original one, which proves that compression helps decrease the scale of the neural network. Comparing the compressed FNN before repair and after repair, the performance of the FNN improves from $91\%$ to $98\%$ and almost achieves the same level as the original FNN, which our repairing method is able to improve the capability of the compressed network. In addition, Table \ref{tab: random im} shows the discrepancy result of our repair method. The total mean discrepancies after repair are all lower than our target values for each testing input, and some are even only one-tenth of the original value, which demonstrates the effectiveness of our repair method while keeping the performance. Fig. \ref{fig:random im} shows the repairing result in detail via a randomly chosen image. The input image is a handwritten digit ``9". The blue dots are the scoring output of the original network, with the highest score at label 9. The green whisker bar line on each label represents the guaranteed output range of the compressed network before repair. Relatively, the red whisker bar line is the guaranteed output range of the network after repair. It is obvious that each red whisker line is closer to the blue dot than the green one, proving that the discrepancy is mitigated after repair.

\begin{table}[t!]
    \centering
    \caption{Networks overview}
    \label{tab: overview}
    \begin{tabular}{cccc}
    \hline Network & Parameters & Size (KB) & Accuracy (\%) \\
    \hline Original Network & 218,058 & 855 & 98\%\\
    Compressed Network & 218,058 & 226 & 91\%\\
    Repaired Network & 218,058 & 226 & 98\%\\
    \hline
    \end{tabular}
\end{table}

\begin{table}[ht!]
    \centering
    \caption{Discrepancy results between before and after repair with random images}
    \begin{tabular}{ccc}
    \hline Input Image &  Mean $\delta$ (Before) & Mean $\delta$ (After) \\
    \hline 0 & 0.4750 & 0.0625 \\
    1 & 0.4189 & 0.0969 \\
    2 & 0.6339 & 0.1034 \\
    3 & 0.4746 & 0.0870 \\
    4 & 0.8272 & 0.1217 \\
    5 & 0.6200 & 0.1189 \\
    6 & 0.6458 & 0.0701 \\
    7 & 0.5440 & 0.1505 \\ 
    8 & 0.5908 & 0.0953 \\
    9 & 0.8283 & 0.1053 \\
    \hline
    \end{tabular}
    \label{tab: random im}
\end{table}

\begin{figure}[t!]
    \centering
    \includegraphics[scale=0.28]{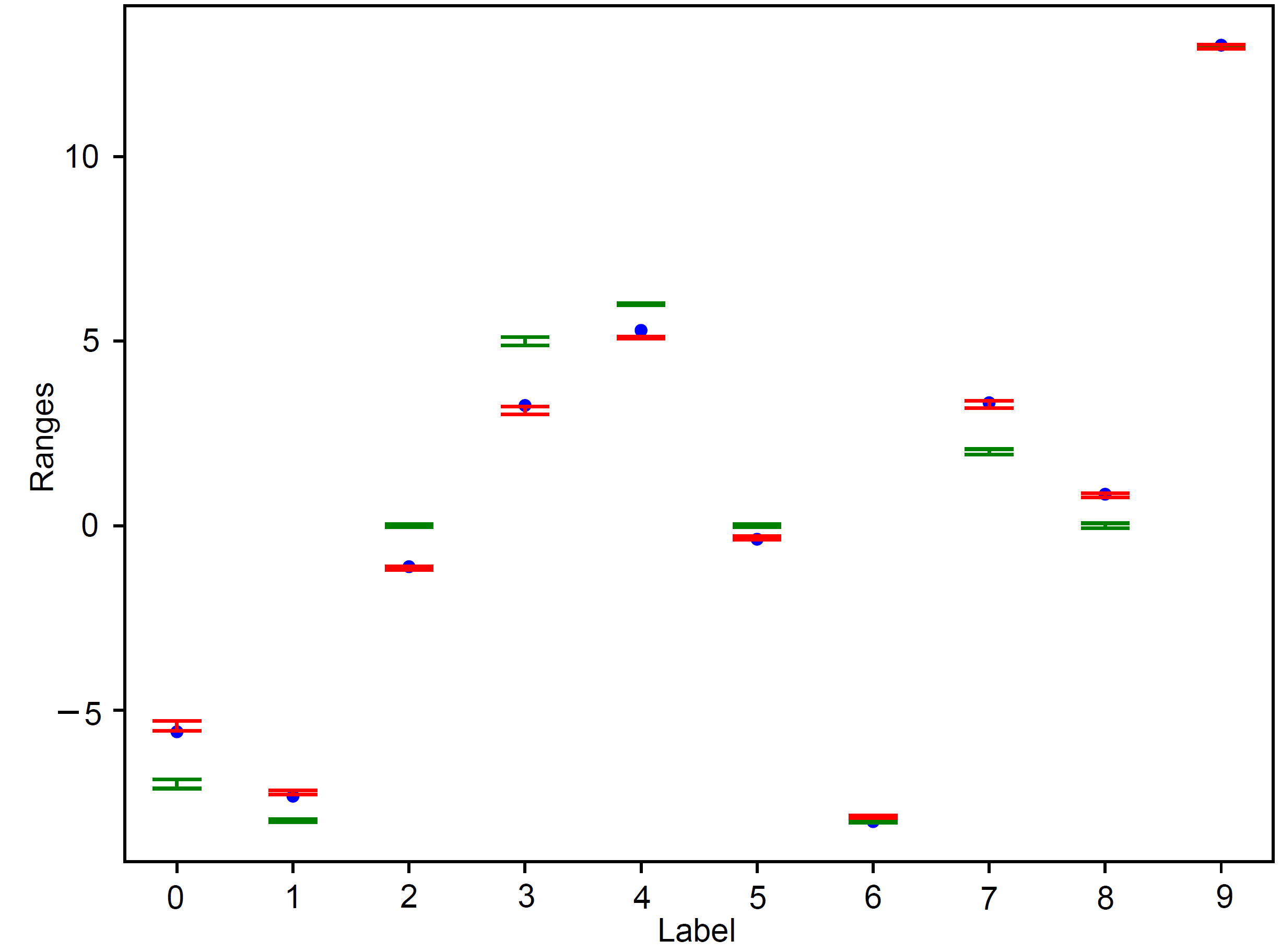}
    \caption{Repair results with a handwritten digit ``9". Blue dots are the output for the original network $\Phi_1$. The green whisker line represents the output range of the compressed network $\Phi_2$ before repair. The red whisker line represents the output range of the compressed network $\hat\Phi_2$ after repair. The outcome shows that the repaired network generates a more precise output range (red whisker lines) closer to the original outputs (blue dots).}
    \label{fig:random im}
\end{figure}
\begin{figure}[t!]
    \centering
    \includegraphics[scale=0.53]{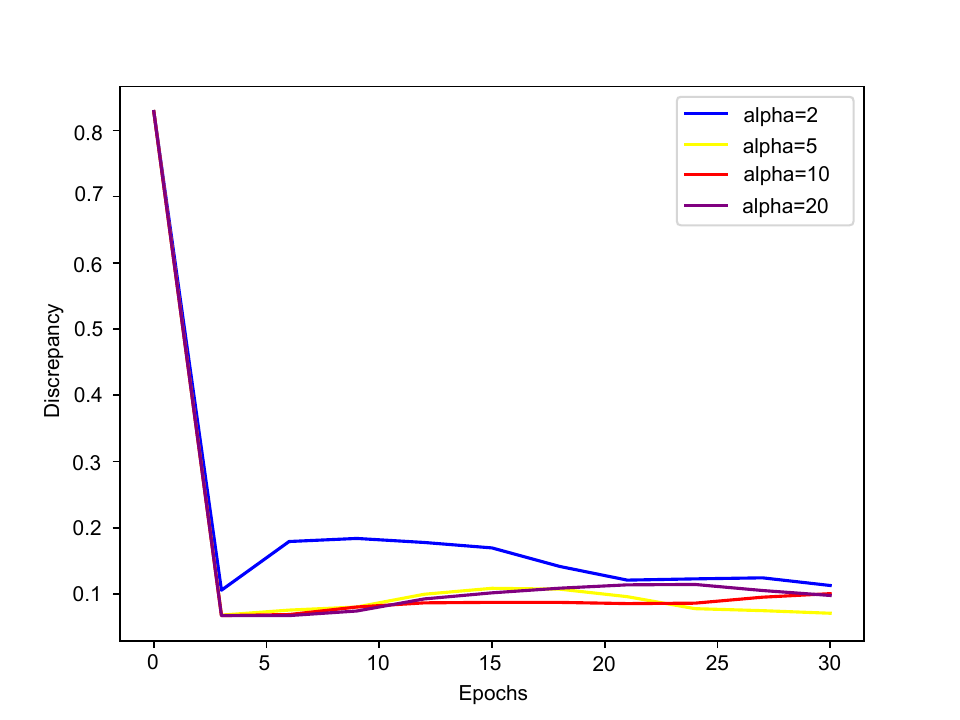}
    \caption{Repair result with a handwritten digit ``9". Different color lines are the average discrepancy of input images between the original network $\Phi_1$ and compressed network $\Phi_2$ with different $\alpha$ settings. The outcome shows that different $\alpha$ may lead to different repair performance, but the repair process can always decrease the discrepancy.}
    \label{fig: dis im}
\end{figure}

\begin{figure}[t!]
    \centering
    \includegraphics[scale=0.53]{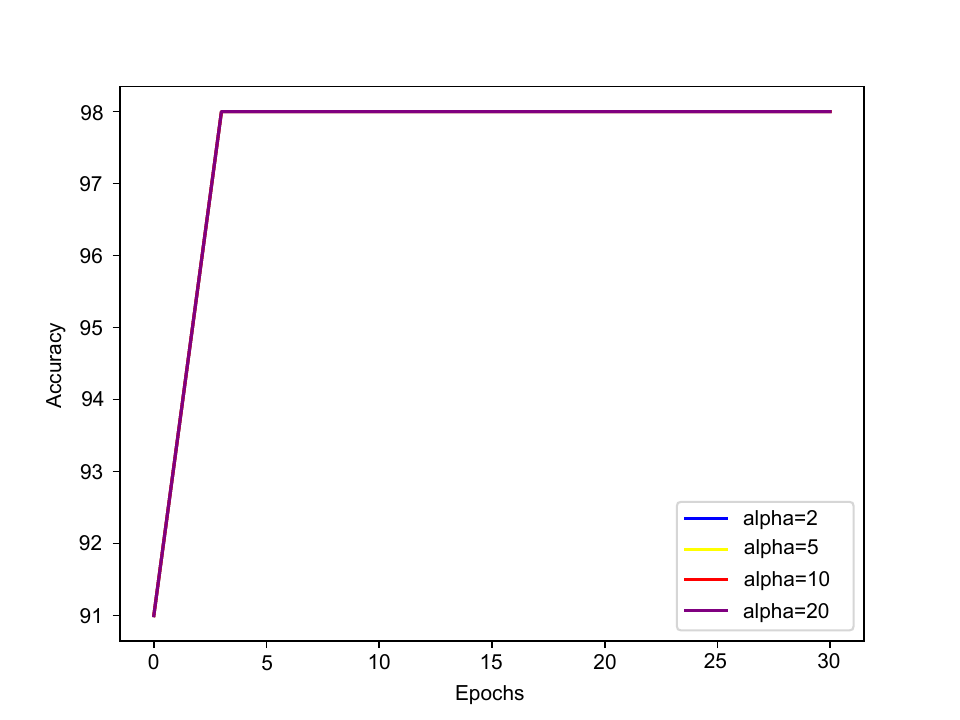}
    \caption{Accuracy of the whole test set along with the repair process with different $\alpha$ settings. All $\alpha$ values can help the compressed network reach $98\%$ accuracy in 3 epochs.}
    \label{fig: acc im}
\end{figure}

To demonstrate the repairing process and indicate the different repair performances with different $\alpha = 2,5,10, 20$, we show the discrepancy reduction and accuracy increase along with the repairing process. The repairing process can reduce the average discrepancy between the two network outputs, as shown in Fig. \ref{fig: dis im}. A small $\alpha$ has a larger discrepancy result because of the larger step size to the optimal value. A large $\alpha$ may not lower the discrepancy to a smaller value. As for the accuracy part, in Fig. \ref{fig: acc im}, all $\alpha$ values can repair the compressed network to reach $98\%$ accuracy, the same as the original network. Thus, our method does mitigate the discrepancy between the original network $\Phi_1$ and the compressed network $\Phi_2$. The $\alpha$ is also important to have a better repair performance, especially for complicated networks.

\section{Conclusions}
This work mainly proposes an approach to repair the compressed FNN based on the equivalence evaluation method. It formally defines the structure of the merged neural network with two given networks and develops reachability analysis methods to compute the reachable set of the discrepancy with the same input. The repair framework is explained in detail, such as the construction of the re-train dataset, the repair result criteria, and the compressed network update. Then, our approach successfully gives repair results between the original and compressed networks by showing the mean discrepancy before and after repair. The repair task is carried out by applying the discrepancy domain to the compressed network output to re-train the compressed network with randomly chosen samples, as shown by the MNIST experiment.

\bibliographystyle{ieeetr}
\bibliography{ref}

\end{document}